\definecolor{darkorchid}{rgb}{1.0, 0.58, 0.16}
\definecolor{darkturquoise}{rgb}{0.32, 0.63, 0.34}
\newtheorem{lemma}{Lemma}
\newtheorem{theorem}{Theorem}
\newtheorem{corollary}{Corollary}
\newtheorem{definition}{Definition}
\newtheorem{assumption}{Assumption}
\newtheorem{remark}{Remark}
\theoremstyle{definition}
\DeclareMathOperator*{\argmax}{argmax}
\icmltitlerunning{DBSCAN++: Towards fast and scalable density clustering}
\begin{document}

\twocolumn[
\icmltitle{DBSCAN++: Towards fast and scalable density clustering}



\icmlsetsymbol{equal}{*}

\begin{icmlauthorlist}
\icmlauthor{Jennifer Jang}{uber}
\icmlauthor{Heinrich Jiang}{goo}
\end{icmlauthorlist}

\icmlaffiliation{uber}{Uber}
\icmlaffiliation{goo}{Google Research}

\icmlcorrespondingauthor{Jennifer Jang}{j.jang42@gmail.com}

\icmlkeywords{Machine Learning, ICML}

\vskip 0.3in
]



\printAffiliationsAndNotice{} 

\begin{abstract}
DBSCAN is a classical density-based clustering procedure with tremendous practical relevance. However, DBSCAN implicitly needs to compute the empirical density for each sample point, leading to a quadratic worst-case time complexity, which is too slow on large datasets. We propose DBSCAN++, a simple modification of DBSCAN which only requires computing the densities for a chosen subset of points. We show empirically that, compared to traditional DBSCAN, DBSCAN++ can provide not only competitive performance but also added robustness in the bandwidth hyperparameter while taking a fraction of the runtime. We also present statistical consistency guarantees showing the trade-off between computational cost and estimation rates. Surprisingly, up to a certain point, we can enjoy the same estimation rates while lowering computational cost, showing that DBSCAN++ is a sub-quadratic algorithm that attains minimax optimal rates for level-set estimation, a quality that may be of independent interest.
\end{abstract}

\section{Introduction}
Density-based clustering algorithms such as Mean Shift \citep{cheng1995mean} and DBSCAN \citep{ester1996density} have made a large impact on a wide range of areas in data analysis, including outlier detection, computer vision, and medical imaging. As data volumes rise, non-parametric unsupervised procedures are becoming ever more important in understanding large datasets. Thus, there is an increasing need to establish more efficient versions of these algorithms. In this paper, we focus on improving the classical DBSCAN procedure. 

It was long believed that DBSCAN had a runtime of $O(n\log{n})$ until it was proven to be $O(n^2)$ in the worst case by \citet{gan2015dbscan}. They showed that while DBSCAN can run in $O(n\log{n})$ when the dimension is at most $2$, it quickly starts to exhibit quadratic behavior in high dimensions and/or when $n$ becomes large. In fact, we show in Figure~\ref{figure:3gaussian} that even with a simple mixture of 3-dimensional Gaussians, DBSCAN already starts to show quadratic behavior. 

The quadratic runtime for these density-based procedures can be seen from the fact that they implicitly must compute density estimates for each data point, which is linear time in the worst case for each query. In the case of DBSCAN, such queries are proximity-based. There has been much work done in using space-partitioning data structures such as KD-Trees \citep{bentley1975multidimensional} and Cover Trees \citep{beygelzimer2006cover} to improve query times, but these structures are all still linear in the worst-case. Another line of work that has had practical success is in approximate nearest neighbor methods (e.g. \citet{indyk1998approximate,datar2004locality}) which have sub-linear queries, but such methods come with few approximation guarantees.

DBSCAN proceeds by computing the empirical densities for each sample point and then designating points whose densities are above a threshold as {\it core-points}. Then, a neighborhood graph of the core-points is constructed, and the clusters are assigned based on the connected components.

In this paper, we present DBSCAN++, a step towards a fast and scalable DBSCAN. DBSCAN++ is based on the observation that we only need to compute the density estimates for a subset $m$ of the $n$ data points, where $m$ can be much smaller than $n$, in order to cluster properly. To choose these $m$ points, we provide two simple strategies: uniform and greedy $K$-center-based sampling. The resulting procedure has $O(mn)$ worst-case runtime.  

We show that with this modification, we still maintain statistical consistency guarantees. We show the trade-off between computational cost and estimation rates. Interestingly, up to a certain point, we can enjoy the same minimax-optimal estimation rates attained by DBSCAN while making $m$ (instead of the larger $n$) empirical density queries, thus leading to a sub-quadratic procedure. In some cases, we saw that our method of limiting the number of core points can act as a regularization, thus reducing the sensitivity of classical DBSCAN to its parameters.

We show on both simulated datasets and real datasets that DBSCAN++ runs in a fraction of the time compared to DBSCAN, while giving competitive performance and consistently producing more robust clustering scores across hyperparameter settings.

\section{Related Works}
\begin{figure}
\includegraphics[width=0.48\textwidth]{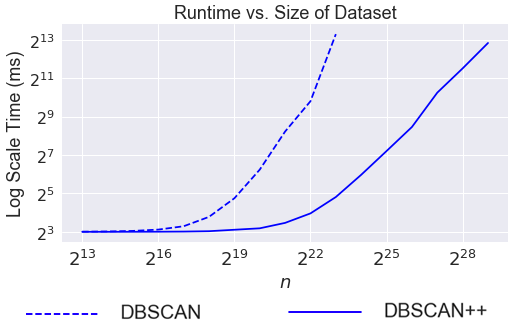}
\vspace{-0.5cm}
\caption{\label{figure:3gaussian}\textit{Runtime (seconds) vs dataset size to cluster a mixture of four $3$-dimensional Gaussians.} Using Gaussian mixtures, we see that DBSCAN starts to show quadratic behavior as the dataset gets large. After $10^6$ points, DBSCAN ran too slowly and was terminated after 3 hours. This is with only $3$ dimensions.}
\vspace{-0.5cm}
\end{figure}


There has been much work done on finding faster variants of DBSCAN. We can only highlight some of these works here. One approach is to speed up the nearest neighbor queries that DBSCAN uses \cite{huang2009grid,vijayalaksmi2012fast,kumar2016fast}. Another approach is to find a set of "leader" points that still preserve the structure of the original data set and then identify clusters based on the clustering of these "leader" points \cite{geng2000fast,viswanath2006dbscan,viswanath2009rough}. Our approach  of finding core points is similar but is simpler and comes with theoretical guarantees.  \citet{liu2006fast} modified DBSCAN by selecting clustering seeds among the unlabeled core points in an orderly manner in order to reduce computation time in regions that have already been clustered. Other heuristics include \citep{borah2004improved,zhou2000fdbscan,patwary2012new,kryszkiewicz2010ti}. 

There are also numerous approaches based on parallel computing such as \cite{xu1999fast,zhou2000approaches,arlia2001experiments,brecheisen2006parallel,chen2010parallel,patwary2012new,gotz2015hpdbscan} including map-reduce based approaches \cite{fu2011research,he2011mr,dai2012efficient,noticewala2014mr}. Then there are distributed approaches to DBSCAN where data is partitioned across different locations and there may be communication cost constraints \cite{januzaj2004scalable,januzaj2004dbdc,liu2012privacy,neto2015g2p,lulli2016ng}.  It is also worth mentioning \citet{andrade2013g}, who presented a GPU implementation of DBSCAN that can be over 100x faster than sequential DBSCAN. In this paper, we assume a single processor although extending our approach to the parallel or distributed settings could be a future research direction.

We now discuss the theoretical work done for DBSCAN. Despite the practical significance of DBSCAN, its statistical properties has only been explored recently \cite{sriperumbudur2012consistency,jiang2017density,wang2017optimal,steinwart2017adaptive}. Such analyses make use of recent developments in topological data analysis to show that DBSCAN estimates the connected components of a level-set of the underlying density.

It turns out there has been a long history in estimating the level-sets of the density function \cite{hartigan1975clustering,tsybakov1997nonparametric,singh2009adaptive,rigollet2009optimal,rinaldo2010generalized,chaudhuri2010rates,steinwart2011adaptive,balakrishnan2013cluster,chaudhuri2014consistent,jiang2017uniform,chen2017density}. However, most of these methods have little practical value (some are unimplementable), and DBSCAN is one of the only practical methods that is able to attain the strongest guarantees, including finite-sample Hausdorff minimax optimal rates. In fact the only previous method that was shown to attain such guarantees was the impractical histogram-based method of \citet{singh2009adaptive}, until \citet{jiang2017density} showed that DBSCAN attained almost identical guarantees.  This paper shows that DBSCAN++ can attain similar guarantees while being sub-quadratic in computational complexity as well as the precise trade-off in estimation rates for further computational speedup.

\section{Algorithm}
We have $n$ i.i.d. samples $X = \{x_1,...,x_n\}$ drawn from a distribution $\mathcal{F}$ over $\mathbb{R}^D$. We now define {\it core-points}, which are essentially points with high empirical density defined with respect to the two hyperparameters of DBSCAN, $\text{minPts}$ and $\varepsilon$. The latter is also known as the {\it bandwidth}.
\begin{definition}
Let $\varepsilon > 0$ and $\text{minPts}$ be a positive integer. Then $x \in X$ is a core-point if $|B(x, \varepsilon) \cap X| \ge \text{minPts}$, where $B(x, \varepsilon) := \{x' : |x - x'| \le \varepsilon\}$. 
\end{definition}
In other words, a core-point is a sample point that has at least $\text{minPts}$ sample points within its $\varepsilon$-radius neighborhood.


DBSCAN \citep{ester1996density} is shown as Algorithm~\ref{fig:dbscan}, which is in a more concise but equivalent form to the original version (see  \citet{jiang2017density}). It creates a graph $G$ with core-points as vertices and edges connecting core points, which are distance at most $\varepsilon$ apart. The final clusters are represented by the connected components in this graph along with non-core-points that are close to such a connected component. The remaining points are designated as noise points and are left unclustered. Noise points can be seen as outliers.

\begin{algorithm}[H]
\caption{\label{fig:dbscan}DBSCAN}
\begin{algorithmic}
\STATE {\bf Inputs}: $X$, $\varepsilon$, minPts
\STATE $C \gets$ core-points in $X$ given $\varepsilon$ and minPts
\STATE $\textit{G} \gets$ initialize empty graph
\FOR {$c \in C$}
    \STATE Add an edge (and possibly a vertex or vertices) in $G$ from $c$ to all points in $X \cap B(c, \varepsilon)$
\ENDFOR
\STATE {\bf return} connected components of $G$.
\end{algorithmic}
\end{algorithm}

\begin{figure}[H]
\begin{center}
\includegraphics[width=0.48\textwidth]{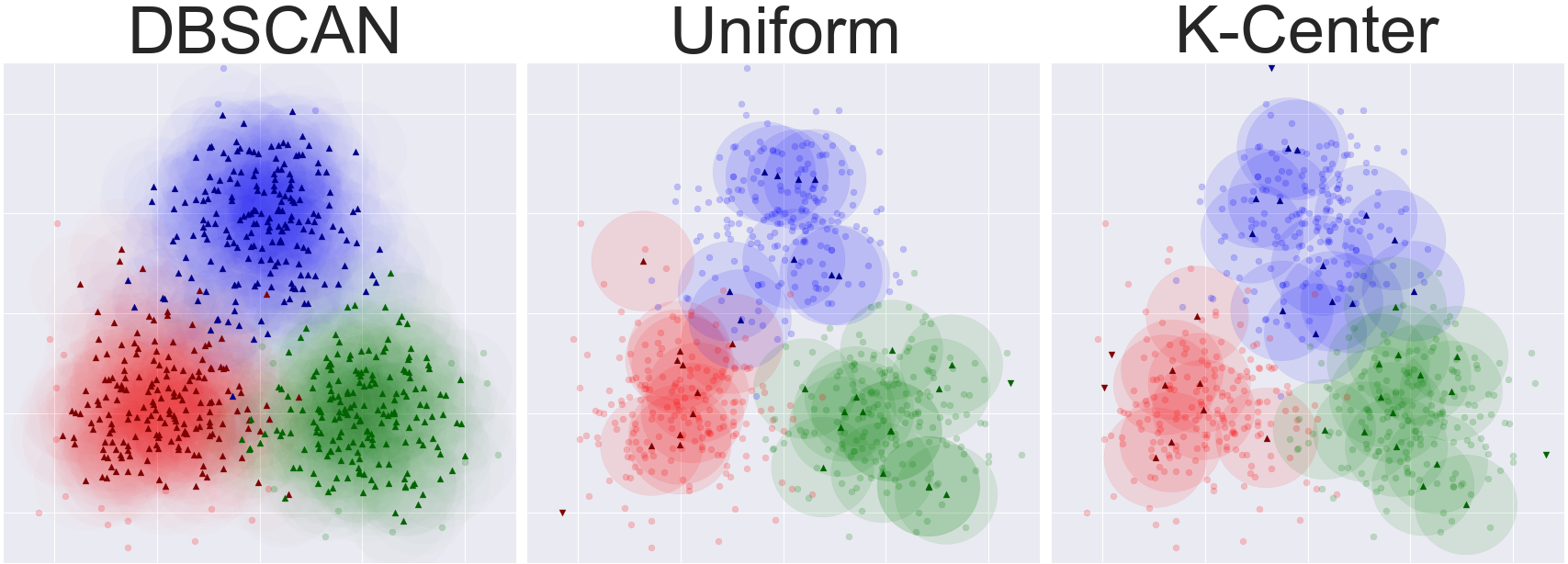}
\end{center}
\caption{\label{fig:coverage} \textit{Core-points from a mixture of three 2D Gaussians.} Each point marked with a triangle represents a core-point and the shaded area its $\varepsilon$-neighborhood. The total $\varepsilon$-radii area of DBSCAN++ core-points provides adequate coverage of the dataset. The $K$-center initialization produces an even more efficient covering. The points that are not covered will be designated as outliers. This illustrates that a strategically selected subset of core points can produce a reasonable $\varepsilon$-neighborhood cover for clustering.}
\vspace{-10pt}
\end{figure}

\subsection{Uniform Initialization}

DBSCAN++, shown in Algorithm~\ref{fig:dbscanpp}, proceeds as follows: First, it chooses a subset $S$ of $m$ uniformly sampled points from the dataset $X$. Then, it computes the empirical density of points in $S$ w.r.t. the entire dataset. That is, a point $x\in S$ is a core point if $|B(x, \varepsilon) \cap X| \ge \text{minPts}$. From here, DBSCAN++ builds a similar neighborhood graph $G$ of core-points in $S$ and finds the connected components in $G$. Finally, it clusters the rest of the unlabeled points to their closest core-points. Thus, since it only recovers a fraction of the core-points, it requires expensive density estimation queries on only $m$ of the $n$ samples. However, the intuition, as shown in Figure~\ref{fig:coverage}, is that a smaller sample of core-points can still provide adequate coverage of the dataset and lead to a reasonable clustering.

\begin{algorithm}
\caption{\label{fig:dbscanpp}DBSCAN++}
\begin{algorithmic}
\STATE {\bf Inputs:} $X$, $m$, $\varepsilon$, minPts
\STATE $\textit{S} \gets$ sample $m$ points from $X$.
\STATE $\textit{C} \gets$ all core-points in $S$ w.r.t $X$, $\varepsilon$, and minPts
\STATE $\textit{G} \gets$ empty graph.
\FOR {$c \in C$}
    \STATE Add an edge (and possibly a vertex or vertices) in $G$ from $c$ to all points in $X \cap B(c, \varepsilon)$
\ENDFOR
\STATE {\bf return} connected components of $G$.
\end{algorithmic}
\end{algorithm}

\subsection{K-Center Initialization}
Instead of uniformly choosing the subset of $m$ points at random, we can use $K$-center \citep{gonzalez1985clustering,har2011geometric}, which aims at finding the subset of size $m$ that minimizes the maximum distance of any point in $X$ to its closest point in that subset. In other words, it tries to find the most efficient covering of the sample points.
We use the greedy initialization method for approximating $K$-center (Algorithm~\ref{fig:kcenter}), which repeatedly picks the farthest point from any point currently in the set. This process continues until we have a total of $m$ points. This method gives a $2$-approximation to the $K$-center problem.

\begin{algorithm}[H]
\caption{\label{fig:kcenter}Greedy $K$-center Initialization}
\begin{algorithmic}
\STATE {\bf Input:} $X$, $m$.
\STATE $S \gets \{x_1 \}$.
\FOR {$i$ from $1$ to $m-1$}
    \STATE Add $\argmax_{x \in X}\min_{s \in S} |x - s|$ to $S$.
\ENDFOR
\STATE {\bf return} $S$.
\end{algorithmic}
\end{algorithm}

\subsection{Time Complexity}

DBSCAN++ has a time complexity of $O(nm)$. Choosing the set $S$ takes linear time for the uniform initialization method and $O(mn)$ for the greedy $K$-center approach \cite{gonzalez1985clustering}. The next step is to find the core-points. We use a KDTree to query for the points within the $\varepsilon$-radii ball for each point in $S$. Each such query takes $O(n)$ in the worst case, and doing so for $m$ sampled points leads to a cost of $O(nm)$. Constructing the graph takes $O(mn)$ time and running a depth-first search on the graph recovers the connected components in $O(nm)$ since the graph will have at most $O(nm)$ edges.

The last step is to cluster the remaining points to the nearest core point. We once again use a KDTree, which takes $O(m)$ for each of $O(n)$ points, leading to a time complexity of $O(nm)$ as well. Thus, the time complexity of DBSCAN++ is $O(nm)$.


\section{Theoretical Analysis}
In this section, we show that DBSCAN++ is a consistent estimator of the density level-sets. It was recently shown by \citet{jiang2017density} that DBSCAN does this with finite-sample guarantees. We extend this analysis to show that our modified DBSCAN++ also has statistical consistency guarantees, and we show the trade-off between speed and convergence rate. 

\begin{definition}(Level-set)
The $\lambda$-level-set of $f$ is defined as $L_f(\lambda) := \{ x\in\mathcal{X} : f(x) \ge \lambda\}$.
\end{definition}
Our results are under the setting that the density level $\lambda$ is known and gives insight into how to tune the parameters based on the desired density level.

\subsection{Regularity Assumptions}

 We have $n$ i.i.d. samples $X = \{x_1,...,x_n\}$ drawn from a distribution $\mathcal{F}$ over $\mathbb{R}^D$. We take $f$ to be the density of $\mathcal{F}$ over the uniform measure on $\mathbb{R}^D$.
 \begin{assumption}\label{assumption1}
 $f$ is continuous and has compact support $\mathcal{X} \subseteq \mathbb{R}^D$.
 \end{assumption}

Much of the results will depend on the behavior of level-set boundaries. Thus, we require sufficient drop-off at the boundaries as well as separation between the CCs at a particular level-set.

Define the following shorthands for distance from a point to a set and the neighborhood around a set.
\begin{definition}
$d(x, A) := \inf_{x' \in A} |x - x'|$,
$B(C, r) := \{x \in \mathcal{X} : d(x, C) \le r \}$,
\end{definition}

\begin{assumption} [$\beta$-regularity of level-sets] \label{assumption2}
Let $0 < \beta < \infty$.
There exist $\check{C}, \hat{C}, r_c > 0$ such that the following holds for all $x \in B(L_f(\lambda), r_c) \backslash L_f(\lambda)$.
\begin{align*}
    \check{C} \cdot d(x, L_f(\lambda))^{\beta} \le \lambda - f(x) \le \hat{C} \cdot d(x, L_f(\lambda))^{\beta}.
\end{align*}
\end{assumption}

\begin{remark}
We can choose any $0 < \beta < \infty$. The $\beta$-regularity condition is a standard assumption in level-set analyses. See \cite{singh2009adaptive}.
The higher the $\beta$, the more smooth the density is around the boundary of the level-set and thus the less salient it is. This makes it more difficult to recover the level-set.
\end{remark}

\subsection{Hyperparameter Settings}

In this section, we state the hyperparameter settings in terms of $n$, the sample size, and the desired density level $\lambda$ in order for statistical consistency guarantees to hold. 
Define $C_{\delta, n} = 16\log(2/\delta)\sqrt{\log n}$, where $\delta$, $0 < \delta < 1$, is a confidence parameter which will be used later (i.e. guarantees will hold with probability at  least  $1 - \delta$).
\begin{align*}
\varepsilon = \left(\frac{\text{minPts}}{n\cdot v_D \cdot (\lambda - \lambda \cdot C_{\delta, n}^2/\sqrt{\text{minPts}})} \right)^{1/D},
\end{align*}
where $v_D$ is  the volume of the unit ball in $\mathbb{R}^D$ and $\text{minPts}$ satisfies
\begin{align*}
C_l \cdot (\log n)^2 \le \text{minPts} \le C_u \cdot (\log n)^{\frac{2D}{2+D}} \cdot n^{2\beta / (2\beta + D)},
\end{align*}
and $C_l$ and $C_u$ are positive constants depending on $\delta, f$.

\subsection{Level-set estimation  result}

We give the estimation rate under the Hausdorff metric. 
\begin{definition} [Hausdorff Distance]
\begin{align*}
d_{\text{Haus}}(A, A') = \max \{ \sup_{x \in A} d(x, A'), \sup_{x' \in A'} d(x', A) \}.
\end{align*}
\end{definition}
\begin{theorem} \label{theo:levelset}
Suppose Assumption~\ref{assumption1} and~\ref{assumption2} hold, and assume the parameter settings in the previous section.
There exists $C_l, C$ sufficiently large and $C_u$ sufficiently small such that the following holds with probability at least $1 - \delta$.
Let $\widehat{L_f(\lambda)}$ be the core-points returned by Algorithm~\ref{fig:dbscanpp} under uniform initialization or greedy $K$-center initialization. Then,
\begin{align*}
&d_{\text{Haus}}(\widehat{L_f(\lambda)}, L_f(\lambda)) \\
&\le C\cdot \left( C_{\delta, n}^{2/\beta} \cdot {\text{minPts}}^{-1/2\beta} + C_{\delta, n}^{1/D} \cdot \left(\frac{\sqrt{\log m}}{m}\right)^{1/D} \right).
\end{align*}
\end{theorem}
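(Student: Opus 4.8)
The plan is to bound the two one-sided suprema in $d_{\text{Haus}}(\widehat{L_f(\lambda)}, L_f(\lambda))$ separately and take their maximum, which produces the two summands in the stated bound. Write $r_1 = C_{\delta,n}^{2/\beta}\cdot\text{minPts}^{-1/(2\beta)}$ and $r_2 = C_{\delta,n}^{1/D}(\sqrt{\log m}/m)^{1/D}$. The workhorse throughout is a uniform concentration inequality for empirical ball masses: since balls in $\mathbb{R}^D$ form a VC class of dimension $D+1$, with probability at least $1-\delta$, simultaneously over all $x$ and $r$, the empirical mass $|B(x,r)\cap X|/n$ stays within $C_{\delta,n}\sqrt{\mathcal{F}(B(x,r))/n}$ of the true mass $\mathcal{F}(B(x,r))$ (for balls above the $C_{\delta,n}^2/n$ resolution floor). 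I would combine this with Assumption~\ref{assumption2}, which converts a density deficit $\lambda - f(x)$ into a distance $d(x, L_f(\lambda))$ and back. The choice of $\varepsilon$ is engineered so that $n\cdot v_D\varepsilon^D\cdot\lambda$ exceeds $\text{minPts}$ by a margin of order $C_{\delta,n}^2\sqrt{\text{minPts}}$, which safely dominates the concentration fluctuation of order $C_{\delta,n}\sqrt{\text{minPts}}$; this margin is the slack that drives both directions.

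First I would prove the outer inclusion, $\sup_{x\in\widehat{L_f(\lambda)}} d(x, L_f(\lambda)) \lesssim r_1$, giving the first term. Arguing by contraposition, suppose a sample point $x$ has $d(x, L_f(\lambda)) = d$ with $d$ much larger than $r_1$ (and $d > \varepsilon$, so $B(x,\varepsilon)$ lies entirely outside $L_f(\lambda)$). By $\beta$-regularity, $f \le \lambda - \check{C}(d-\varepsilon)^\beta$ on the whole ball, hence $\mathcal{F}(B(x,\varepsilon)) \le (\lambda - \check{C}(d-\varepsilon)^\beta)v_D\varepsilon^D$. Feeding this into the concentration bound, the empirical count $|B(x,\varepsilon)\cap X|$ falls below $\text{minPts}$ precisely when $\check{C}(d-\varepsilon)^\beta$ exceeds a quantity of order $\lambda C_{\delta,n}^2/\sqrt{\text{minPts}}$, so any \emph{core}-point must have $(d-\varepsilon)^\beta \lesssim C_{\delta,n}^2/\sqrt{\text{minPts}}$, i.e. $d \lesssim r_1 + \varepsilon \lesssim r_1$. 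Here the upper bound on $\text{minPts}$ is used to guarantee $\varepsilon \lesssim r_1$ (equivalently $\varepsilon^\beta \lesssim C_{\delta,n}^2/\sqrt{\text{minPts}}$), so $\varepsilon$ never inflates the estimate.

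Next I would prove the coverage inclusion, $\sup_{x\in L_f(\lambda)} d(x, \widehat{L_f(\lambda)}) \lesssim r_1 + r_2$, giving the second term, resting on two claims. (a) Every sample point whose deficit $\lambda - f(s)$ is at most a constant multiple of $C_{\delta,n}^2/\sqrt{\text{minPts}}$ is a core-point: over $B(s,\varepsilon)$ the mass is at least $(\lambda - \hat{C}\cdot(\text{deficit}) - O(\varepsilon^\beta))v_D\varepsilon^D$, where the $O(\varepsilon^\beta)$ accounts for the ball protruding past the level-set boundary, and the upper bound on $\text{minPts}$ forces $\varepsilon^\beta$ to be dominated by the margin, so the count stays above $\text{minPts}$; in particular every point of $X$ lying in $L_f(\lambda)$ is a core-point. (b) With high probability the chosen subset $S$ is an $r_2$-net: every ball of radius $r_2$ whose mass clears a floor of order $C_{\delta,n}\sqrt{\log m}/m$ contains a point of $S$. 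For the uniform initialization this is the ball-mass concentration inequality applied to the size-$m$ subsample (the floor $r_2^D \asymp C_{\delta,n}\sqrt{\log m}/m$ fixes $r_2$); for the greedy $K$-center initialization it follows from the $2$-approximation guarantee, whose covering radius is bounded by that of any feasible $m$-cover, in particular the uniform one. Given $x \in L_f(\lambda)$, I would apply (b) to the half-ball $B(x, r_2)\cap L_f(\lambda)$ — which near a boundary still has mass of order $\lambda\, r_2^D$ — to extract a point $s \in S$ lying \emph{inside} $L_f(\lambda)$ and within $r_2$ of $x$; by (a) this $s$ is a core-point, so $d(x, \widehat{L_f(\lambda)}) \le r_2$.

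The main obstacle is the boundary bookkeeping in the coverage direction. Naive covering only guarantees a net point within $r_2$ of $x$ somewhere in the support, but that point can sit just outside $L_f(\lambda)$ at deficit $\sim \hat{C} r_2^\beta$, which need not be dominated by the margin unless $r_2 \lesssim r_1$; this is why I insist on extracting the net point from the level-set side $B(x,r_2)\cap L_f(\lambda)$, which forces a lower bound of order $r_2^D$ on the relevant mass and hence requires controlling the local geometry of the boundary (that it occupies a nonvanishing fraction of small balls centered on it). A secondary point is that $S$ is a subsample of the empirical $X$ rather than an independent i.i.d. draw, so the net property in (b) must be argued either by a hypergeometric concentration for the subsample or by routing through the already-dense full sample, and for $K$-center one must check its deterministic covering radius is of the same order. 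Finally, I would verify that the window for $\text{minPts}$ simultaneously enforces $\varepsilon \lesssim r_1$, keeps all balls above the concentration resolution floor, and keeps the $r_c$-neighborhood hypothesis of Assumption~\ref{assumption2} in force — the constraints packaged into $C_l, C_u$.
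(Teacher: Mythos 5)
Your proposal follows essentially the same route as the paper's proof: the identical two-sided Hausdorff decomposition, with the outer direction controlled by the deficit/ball-mass argument (the paper obtains this term by citing Jiang 2017's DBSCAN bound, since DBSCAN++ core points are a subset of DBSCAN's), and the coverage direction via a ball-mass net argument for the uniform subsample plus the $2$-approximation comparison $\tau \le 2\tau_{\text{opt}} \le 2r_0$ for greedy $K$-center — exactly the paper's use of Lemma 7 of Chaudhuri--Dasgupta and its $K$-center chain. The only substantive difference is your half-ball refinement for the boundary case (the net point may land just outside $L_f(\lambda)$ with deficit $\sim \hat{C} r_2^{\beta}$, which the margin need not dominate when $r_2 \gg r_1$): the paper silently passes over this by applying Lemma 8 of Jiang 2017 to that point as if it lay in the level set, so on this point your treatment is more careful than the published argument, at the cost of an implicit interior-regularity condition on the level-set boundary that the paper never states either.
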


\begin{proof}
There are two quantities to bound: (i) $\max_{x \in \widehat{L_f(\lambda)}} d (x,  L_f(\lambda))$, which ensures that the estimated core-points are not far from the true core-points (i.e. $L_f(\lambda)$), and (ii) $\sup_{x \in L_f(\lambda)} d (x, \widehat{L_f(\lambda)})$, which ensures that the estimates core-points provides a good covering of the level-set. 

The bound for (i) follows by the main result of \citet{jiang2017density}.  This is because DBSCAN++'s estimated core-points are a subset of that of the original DBSCAN  procedure. Thus, $\max_{x \in \widehat{L_f(\lambda)}} d (x,  L_f(\lambda)) \le \max_{x \in \widetilde{L_f(\lambda)}} d (x,  L_f(\lambda))$, where $\widetilde{L_f(\lambda)}$ are the core-points returned by original DBSCAN; this quantity is bounded by $O(C_{\delta, n}^{2/\beta} \cdot \text{minPts}^{-1/2\beta})$ by \citet{jiang2017density}.

We now turn to the other direction and bound $\sup_{x \in L_f(\lambda)} d (x, \widehat{L_f(\lambda)})$. Let $x \in L_f(\lambda)$.

Suppose we use the uniform initialization.
Define $r_0 := \left(\frac{2C_{\delta, n} \sqrt{D \log m}}{m v_D\cdot  \lambda }\right)^{1/D}$. Then, we have
\begin{align*}
&\int_{\mathcal{X}} f(z) \cdot  1[z \in B(x, r_0)] dz \ge v_D{r_0}^D(\lambda - \hat{C} r_0^\beta) \\
&\ge  v_D{r_0}^D \lambda /2 
= \frac{C_{\delta, n} \sqrt{D\log m}}{m},
\end{align*}
where the first inequality holds from Assumption~\ref{assumption2}, the second inequality holds for $n$ sufficiently large, and the last holds from the conditions on $\text{minPts}$.

By the uniform ball convergence rates of Lemma 7 of \citet{chaudhuri2010rates}, we have that with high probability, there 
exists sample point $x' \in S$ such that  $|x - x'| \le r_0$. This is because the ball $B(x, r_0)$ contains sufficiently high true mass to be guaranteed a sample point in $S$. Moreover, this guarantee holds with high probability uniformly over $x \in \mathcal{X}$.
Next, we show that $x'$ is a core-point. This follows by Lemma 8 of \citet{jiang2017density}, which shows that any sample point in $x \in L_f(\lambda)$ satisfies $|B(x,\varepsilon) \cap X| \ge \text{minPts}$. Thus, $x'  \in \widehat{L_f(\lambda)}$. Hence, $\sup_{x \in L_f(\lambda)} d (x, \widehat{L_f(\lambda)}) \le r_0$, as desired.
 
Now suppose we use the greedy $K$-center initialization. Define the following attained $K$-center objective: 
\begin{align*}
    \tau := \max_{x \in X} \min_{s \in S} d(s, x),
\end{align*}
and the optimal $K$-center objective:
\begin{align*}
    \tau_{\text{opt}} := \min_{S' \subseteq \mathcal{X}, |S'| = m} \max_{x \in X} \min_{s \in S'} d(s, x).
\end{align*}
It is known that the greedy $K$-center initialization is a $2$-approximation
(see \citet{gonzalez1985clustering,har2011geometric}), thus
\begin{align*}
   \tau \le 2 \tau_{\text{opt}} \le 2r_0,
\end{align*}
where the last inequality follows with high probability since the $K$-center objective will be sub-optimal if we sampled the $m$ centers uniformly.
Then, we have
\begin{align*}
   & \sup_{x \in L_f(\lambda)} \min_{s \in S} d(s, x) \\
    &\le 
    \max_{x \in X} \min_{s \in S} d(s, x) + d_{\text{Haus}}(L_f(\lambda), X \cap L_f(\lambda))
    \\ &\le \tau + r_0 \le 3r_0.
\end{align*}
The argument then proceeds in the same way as with uniform initialization but with an extra constant factor, as desired.
\end{proof}

\begin{remark}
When taking $\text{minPts}$ to the maximum allowed rate
\begin{align*}
    \text{minPts} \approx n^{2\beta/(2\beta + D)},
\end{align*}
we obtain the error rate (ignoring log factors) of 
\begin{align*}
    d_{\text{Haus}}(\widehat{L_f(\lambda)}, L_f(\lambda)) \lesssim n^{-1/(2\beta +D)} + m^{-1/D}.
\end{align*}

The first term matches the known lower bound for level-set estimation established in Theorem 4 of \citet{tsybakov1997nonparametric}. The second term is the trade-off for computing the empirical densities for only $m$ of the points. In particular, if we take 
\begin{align*}
    m \gtrsim  n^{D/(2\beta + D)},
\end{align*}
then the first term dominates, and we thus have $d_{\text{Haus}}(\widehat{L_f(\lambda)}, L_f(\lambda))\lesssim n^{-1/(2\beta +D)}$, the minimax optimal rate for level-set estimation. This leads to the following result.
\end{remark}

\begin{corollary}
Suppose the conditions of Theorem~\ref{theo:levelset} and set $m \approx n^{D/(2\beta + D)}$. Then, Algorithm~\ref{fig:dbscanpp} is a minimax optimal estimator (up to logarithmic factors) of the density level-set with sub-quadratic runtime of $O(n^{2 - 2\beta/(2\beta + D)})$.
\end{corollary}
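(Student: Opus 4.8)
The plan is to derive this statement directly from Theorem~\ref{theo:levelset}, the rate balancing worked out in the preceding remark, and the $O(nm)$ time complexity established in the Time Complexity section. No new analytic machinery is needed; the content of the proof is (a) verifying that the two prescribed rates for $\text{minPts}$ and $m$ are mutually consistent and admissible, (b) simplifying the resulting exponents, and (c) citing a matching lower bound to upgrade the upper bound to minimax optimality.

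First I would handle the statistical side. I take $\text{minPts}$ at the largest rate the hyperparameter constraints permit, namely $\text{minPts} \approx n^{2\beta/(2\beta+D)}$ up to logarithmic factors; this is admissible precisely because the stated upper bound on $\text{minPts}$ is $C_u \cdot (\log n)^{2D/(2+D)} \cdot n^{2\beta/(2\beta+D)}$, so the choice sits at the top of the allowed window (and comfortably above the lower bound $C_l (\log n)^2$). Substituting into the bound of Theorem~\ref{theo:levelset}, the first term $C_{\delta,n}^{2/\beta}\cdot \text{minPts}^{-1/2\beta}$ becomes $\lesssim n^{-1/(2\beta+D)}$ after discarding logarithmic factors. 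With $m \approx n^{D/(2\beta+D)}$, the second term obeys $(\sqrt{\log m}/m)^{1/D} \lesssim m^{-1/D} = n^{-1/(2\beta+D)}$, so the two contributions balance and $d_{\text{Haus}}(\widehat{L_f(\lambda)}, L_f(\lambda)) \lesssim n^{-1/(2\beta+D)}$ with probability at least $1-\delta$. To conclude minimax optimality I would invoke the matching lower bound of Theorem 4 of \citet{tsybakov1997nonparametric}, which shows that no estimator can achieve a Hausdorff rate faster than $n^{-1/(2\beta+D)}$ over this regularity class; hence the rate above is optimal up to logarithmic factors.

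For the runtime I would simply substitute $m \approx n^{D/(2\beta+D)}$ into the $O(nm)$ complexity, yielding $O(n^{1+D/(2\beta+D)})$, and then carry out the one-line simplification $1 + D/(2\beta+D) = (2\beta+2D)/(2\beta+D) = 2 - 2\beta/(2\beta+D)$. Since $\beta > 0$, the subtracted term $2\beta/(2\beta+D)$ is strictly positive, so this exponent is strictly below $2$ and the procedure is sub-quadratic, as claimed.

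The nearest thing to an obstacle is bookkeeping rather than mathematics: one must confirm that the chosen $\text{minPts}$ genuinely lies inside the admissible interval for this $m$, and that the logarithmic factors swept into the $\lesssim$ and $O(\cdot)$ notation are consistent across the two terms so that the balance $m^{-1/D} \asymp \text{minPts}^{-1/2\beta}$ holds up to logs. Both are routine given the assumptions, so the corollary follows.
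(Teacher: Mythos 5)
Your proposal is correct and follows essentially the same route as the paper: the paper's preceding remark takes $\text{minPts}$ at its maximum admissible rate $n^{2\beta/(2\beta+D)}$ so the first term of Theorem~\ref{theo:levelset} becomes $n^{-1/(2\beta+D)}$ (up to logs), chooses $m \gtrsim n^{D/(2\beta+D)}$ so the second term is dominated, cites Theorem 4 of \citet{tsybakov1997nonparametric} for the matching lower bound, and the runtime claim is exactly your substitution of $m$ into the $O(nm)$ complexity with the identity $1 + D/(2\beta+D) = 2 - 2\beta/(2\beta+D)$. Your additional bookkeeping (admissibility of $\text{minPts}$ within its window, positivity of $2\beta/(2\beta+D)$ for sub-quadraticity) is a harmless elaboration of steps the paper leaves implicit.
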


\subsection{Estimating the connected components}

The previous section shows that the core-points returned by DBSCAN++ recovers the density level-set. The more interesting question is about the actual clustering: that is, whether DBSCAN++ can recover the connected components of the density level-set individually and whether there is a 1:1 correspondence between the clusters returned by DBSCAN++ and the connected components. 

It turns out that to obtain such a result, we need a minor modification of the procedure. That is, after determining the core points, instead of using the $\varepsilon$ cutoff to connect points into the same cluster, we must use a higher cutoff. In fact, any constant value would do as long as it is sufficiently smaller than the pairwise distances between the connected components. For example, for the original DBSCAN algorithm, many analyses must make this same modification. This is known as {\it pruning false clusters} in the literature (see  \citet{kpotufe2011pruning,jiang2017density}). The same analysis carries over to our modification, and we omit it here. We note that pruning does not change the final estimation rates but may change the initial sample size required. 

\begin{figure}
    \begin{center}
        \begin{tabular}{ |p{2.8cm}||p{0.8cm}|p{0.4cm}|p{0.3cm}|p{0.9cm}|p{0.9cm}| }
             \hline
             & $n$ & $D$ & $c$ & $m$\\
             \hline\hline
             (A) iris & 150 & 4 & 3 & 3\\
             \hline
             (B) wine & 178 & 13 & 3 & 5\\
             \hline
             (C) spam & 1401 & 57 & 2 & 793\\
             \hline 
             (D) images & 210 & 19 & 7 & 24\\
             \hline
             (E) MNIST & 60000 & 20 & 10 & 958\\
             \hline
             (F) Libras & 360 & 90 & 15 & 84\\
             \hline
             (G) mobile & 2000 & 20 & 4 & 112\\
             \hline
             (H) zoo & 101 & 16 & 7 & 8\\
             \hline
             (I) seeds & 210 & 19 & 7 & 6\\
             \hline
             (J) letters & 20000 & 16 & 26 & 551\\
             \hline
             (K) phonemes & 4509 & 256 & 5 & 396\\
             \hline
             (L) fashion MNIST & 60000 & 784 & 10 & 5674 \\
             \hline
             (M) celeb-a & 10000 & 40 & 3 & 3511 \\
             \hline
        \end{tabular}
    \caption{\label{fig:datasetsummary}\textit{Summary of datasets used.} Includes dataset size (n), number of features ($D$), number of clusters ($c$), and the ($m$) used by both DBSCAN++ uniform and $K$-center. }
    \end{center}
    \vspace{-0.4cm}
\end{figure}

\subsection{Outlier detection}

One important application of DBSCAN is outlier detection \cite{breunig2000lof,ccelik2011anomaly,thang2011anomaly}. Datapoints not assigned to clusters are {\it noise points} and can be considered outliers. This is because the noise points are the low density points away from the clusters and thus tend to be points with few similar representatives in the dataset.  We show that the noise points DBSCAN++ finds are similar to the noise points discovered by DBSCAN++. We give a simple result that shows that  every DBSCAN noise point is also one DBSCAN++ finds (Lemma~\ref{lemma:noisepoints}). Then, Figure~\ref{figure:tradeoff} (Left) shows that the number of noise points of DBSCAN++ quickly converges to those of DBSCAN as the ratio $m/n$ increases, which combined with Lemma~\ref{lemma:noisepoints}, shows that the noise points DBSCAN++ returns  closely approximates those returned by DBSCAN for $m/n$ sufficiently high.

\begin{figure}
\includegraphics[width=0.5\textwidth]{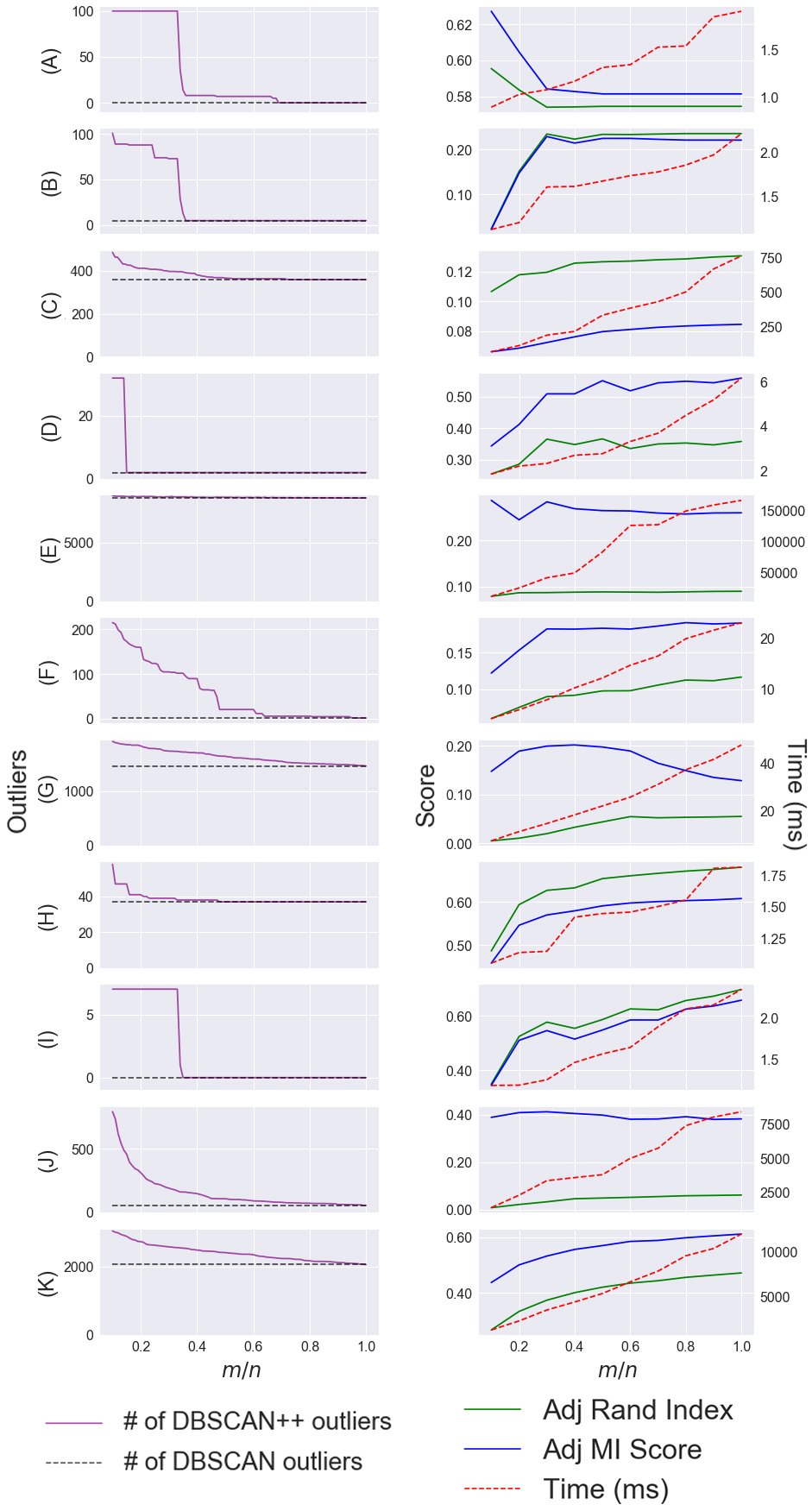}
\caption{\label{figure:tradeoff} Each row corresponds to a dataset. See Figure~\ref{fig:datasetsummary} for dataset descriptions. {\bf Left (Outlier Detection)}: The number of outliers (i.e. noise points) returned by DBSCAN against $m/n$ and compared it to that of DBSCAN++. We see that the set of DBSCAN++ outliers quickly approaches those of DBSCAN's thus showing that DBSCAN++ remains effective at outlier detection compared to DBSCAN, especially when $m/n$ is sufficiently high. {\bf Right (Clustering Performance)}: we plot the clustering accuracy and runtimes for eight real datasets as a function of the ratio $m/n$. As expected, the runtime increases approximately linearly in this ratio, but the clustering scores consistently attain high values when $m/n$ is sufficiently large. Interestingly, sometimes we attain higher scores with lower values of $m/n$ thus giving both better runtime and accuracy. }
\end{figure}

\begin{lemma}[Noise points]\label{lemma:noisepoints}
For any dataset, if $N_0$ and $N_1$ are the noise points found by DBSCAN and DBSCAN++ respectively, then as long as they have the same setting of $\varepsilon$ and $k$, we have that $N_0 \subseteq N_1$. 
\end{lemma}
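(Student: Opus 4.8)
The plan is to reduce the claim to the elementary set-containment of core-points, after characterizing the non-noise (clustered) points purely in terms of the respective core-point sets. Write $C_0$ for the core-points found by DBSCAN and $C_1$ for those found by DBSCAN++. First I would observe that in both algorithms a sample point $x$ is clustered (i.e. \emph{not} a noise point) precisely when it lies within distance $\varepsilon$ of some core-point: a core-point is clustered (it is trivially in $B(x,\varepsilon)$ of itself), and a non-core-point is clustered exactly when an edge is added from it to some core-point $c$, which happens iff $x \in B(c,\varepsilon)$, equivalently $c \in B(x,\varepsilon)$. Hence
\begin{align*}
X \setminus N_0 = \{x \in X : B(x,\varepsilon) \cap C_0 \neq \emptyset\}, \quad X \setminus N_1 = \{x \in X : B(x,\varepsilon) \cap C_1 \neq \emptyset\}.
\end{align*}

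Next I would establish the key containment $C_1 \subseteq C_0$. This is immediate from the definition of a core-point together with the fact that both algorithms share the same $\varepsilon$ and the same neighborhood-count parameter. DBSCAN++ only inspects the subsampled set $S \subseteq X$, but it applies exactly the criterion $|B(x,\varepsilon) \cap X| \ge \text{minPts}$, evaluated against the \emph{full} dataset $X$. Therefore any $c \in C_1 \subseteq S \subseteq X$ satisfies the identical condition that defines membership in $C_0$, so every DBSCAN++ core-point is a DBSCAN core-point.

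Finally I would combine the two observations. If $x \notin N_1$, then $B(x,\varepsilon) \cap C_1 \neq \emptyset$, and since $C_1 \subseteq C_0$ we also have $B(x,\varepsilon) \cap C_0 \neq \emptyset$, so $x \notin N_0$. This shows $X \setminus N_1 \subseteq X \setminus N_0$, which is equivalent to $N_0 \subseteq N_1$, as desired. The only point requiring care is the first step, namely verifying that the noise-versus-clustered dichotomy depends on the core-point set solely through the $\varepsilon$-neighborhood reachability condition, so that monotonicity in the core-point set propagates to monotonicity of the clustered set; once this structural observation is in place, the result is a direct consequence of $C_1 \subseteq C_0$ and I do not anticipate any genuine obstacle.
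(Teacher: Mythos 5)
Your proposal is correct and follows exactly the paper's argument: the paper likewise observes that noise points are precisely the points farther than $\varepsilon$ from every core-point, and that DBSCAN++ core-points form a subset of DBSCAN core-points (since both apply the criterion $|B(x,\varepsilon)\cap X|\ge \text{minPts}$, but DBSCAN++ only to points of $S\subseteq X$), from which the containment $N_0 \subseteq N_1$ follows by monotonicity. Your write-up simply makes explicit the two steps the paper compresses into two sentences.
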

\vspace{-0.4cm}
\begin{proof}
Noise points are those that are further than $\varepsilon$ distance away from a core point. The result follows since DBSCAN++ core points are a subset of that of DBSCAN.
\end{proof}

\section{Experiments}
\subsection{Dataset and setup}

We ran DBSCAN++ with uniform and $K$-center initializations and compared both to DBSCAN on $11$ real datasets as described in Figure~\ref{fig:datasetsummary}. 
We used Phonemes \citep{friedman2001elements}, a dataset of log periodograms of spoken phonemes, and MNIST, a sub-sample of the MNIST handwriting recognition dataset after running a PCA down to $20$ dimensions. The rest of the datasets we used are standard UCI or Kaggle datasets used for clustering. We evaluate the performance via two widely-used clustering scores: the adjusted RAND index \citep{hubert1985comparing} and adjusted mutual information score \citep{vinh2010information}, which are computed against the ground truth. We fixed $\text{minPts} = 10$ for all procedures throughout experiments.

\subsection{Trade-off between accuracy and speed}

The theoretical results suggest that up to a certain point, only computing empirical densities for a subset of the sample points should not noticeably impact the clustering performance. Past that point, we begin to see a trade-off.
We confirm this empirically in Figure~\ref{figure:tradeoff} (Right), which shows that indeed past a certain threshold of $m/n$, the clustering scores remain high. Only when the sub-sample is too small do we begin seeing a significant trade-off in clustering scores. This shows that DBSCAN++ can save considerable computational cost while maintaining similar clustering performance as DBSCAN.

We further demonstrate this point by applying these procedures to image segmentation, where segmentation is done by clustering the image's pixels (with each pixel represented as a $5$-dimensional vector consisting of $(x,y)$ position and RGB color).  Figure~\ref{figure:imagesegmentation} shows that DBSCAN++ provides a very similar segmentation as DBSCAN in a fraction of the time. While this is just a simple qualitative example, it serves to show the applicability of DBSCAN++ to a potentially wide range of problems.

\begin{figure}[H]
\includegraphics[width=0.48\textwidth]{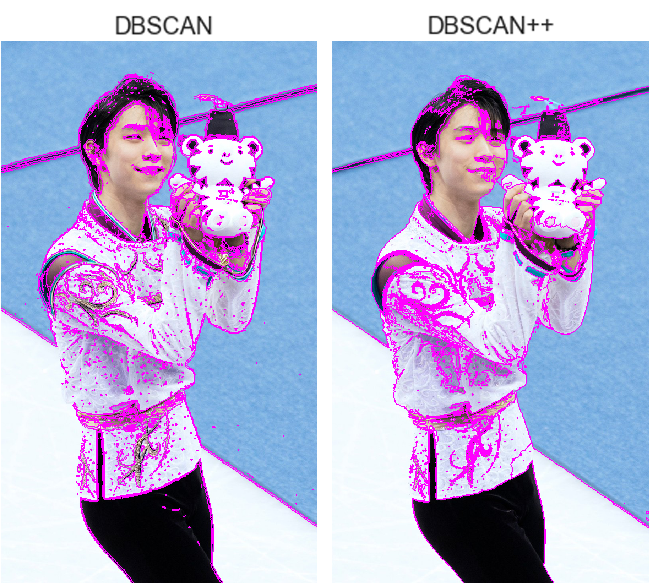}
\vspace{-0.5cm}
\caption{\label{figure:imagesegmentation}\textit{Figure skater Yuzuru Hanyu at the 2018 Olympics.} DBSCAN was initiated with hyperparameters $\varepsilon=8$ and minPts $= 10$, and DBSCAN++  with $\varepsilon=60$, $m/n=0.3$, and minPts $= 10$. DBSCAN++ with $K$-centers initialization recovers similar clusters in the $988\times 750$ image as DBSCAN in far less time: 7.38 seconds versus 44.18 seconds. The speedup becomes more significant on higher resolution images.}
\vspace{-0.3cm}
\end{figure}

\subsection{Robustness to Hyperparameters}

In Figure~\ref{figure:experiments}, we plot each algorithm's performance across a wide range of its hyperparameters. The table in Figure~\ref{fig:results} shows the best scores and runtimes for each dataset and algorithm. For these experiments, we chose $m = p \cdot n^{D / (D + 4)}$, where $0 < p < 1$ was chosen based on validation over just $3$ values, as explained in Figure~\ref{fig:results}. We found that the $K$-center initialization required smaller $p$ due to its ability to find a good covering of the space and more efficiently choose the sample points to query for the empirical density.


The results in Figure~\ref{figure:experiments} show that DBSCAN++ with uniform initialization gives competitive performance compared to DBSCAN but with robustness across a much wider range of $\epsilon$. In fact, in a number of cases, DBSCAN++ was even better than DBSCAN under optimal tuning. 
DBSCAN++ with $K$-center initialization further improves on the clustering results of DBSCAN++ for most of the datasets. Pruning the core-points as DBSCAN++ may act as a regularizing factor to prevent the algorithm's dependency on the preciseness of its parameters.

An explanation of why DBSCAN++ added robustness across $\varepsilon$ follows. When tuning DBSCAN with respect to $\varepsilon$, we found that DBSCAN often performed optimally on only a narrow range of $\varepsilon$. Because $\varepsilon$ controls both the designation of points as core-points as well as the connectivity of the core-points, small changes could produce significantly different clusterings.

\begin{figure}[H]
\includegraphics[width=0.48\textwidth]{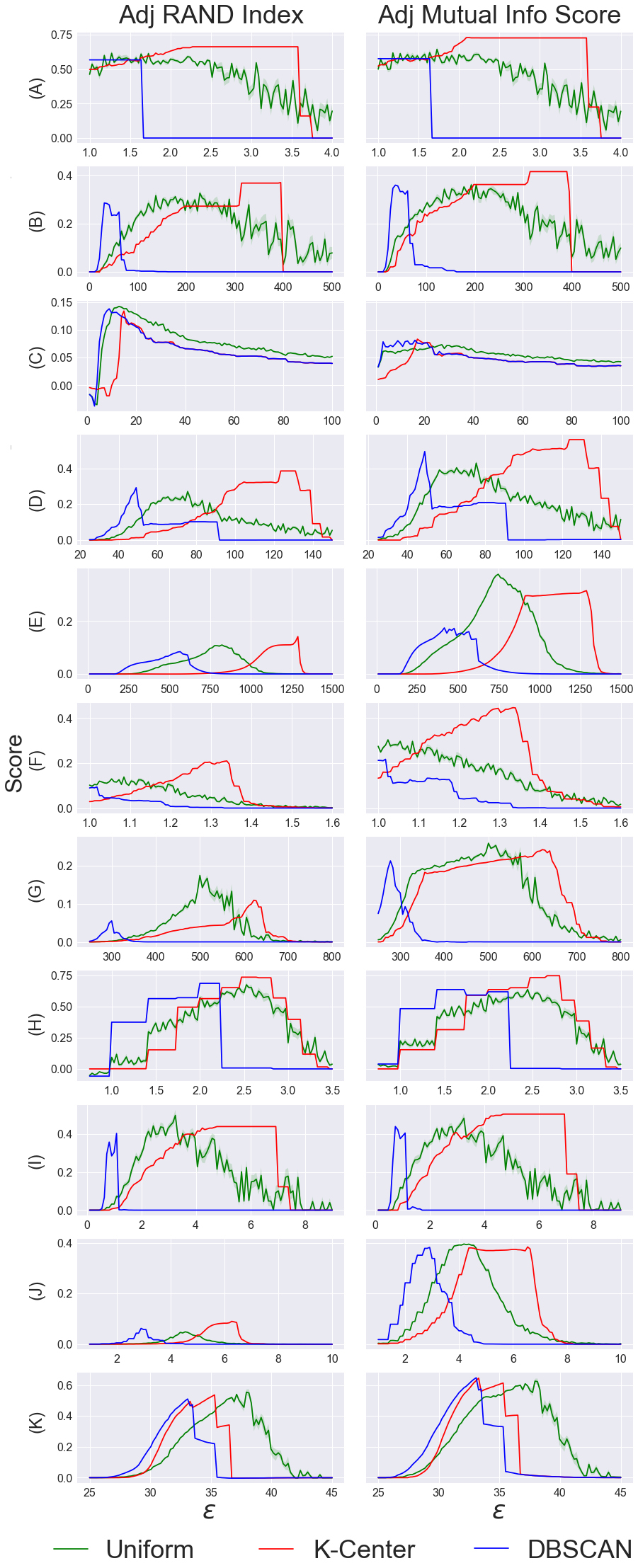}
\vspace{-0.5cm}
\caption{\label{figure:experiments} {\bf Clustering performance over range of hyperparameter  settings}. Experimental results on datasets described in Figure~\ref{fig:datasetsummary}.  Each row corresponds to a single dataset and each column corresponds to a clustering score. For each dataset and clustering score, we plot the scores for DBSCAN++ with uniform and $K$-center sampling vs DBSCAN across a wide range of settings for $\varepsilon$ ($x$-axis).}
\end{figure}

\begin{figure}[H]
\begin{center}
\begin{tabular}{|p{0.33cm}||p{1.3cm}|p{2.2cm}|p{1.4cm}|}
\hline
 & DBSCAN & Uniform & K-Center \\ \hline\hline
 (A) & 0.5681  & {\bf \textcolor{darkorchid}{0.6163}} ($\pm$0.01)  & {\bf \textcolor{darkturquoise}{0.6634}} \\
 & 0.5768  & {\bf \textcolor{darkorchid}{0.6449}} ($\pm$0.01)  & {\bf \textcolor{darkturquoise}{0.7301}} \\ \hline
 (B) & 0.2851  & {\bf \textcolor{darkorchid}{0.3254}} ($\pm$0.01)  & {\bf \textcolor{darkturquoise}{0.3694}} \\
 & 0.3587  & {\bf \textcolor{darkorchid}{0.3605}} ($\pm$0.00)  & {\bf \textcolor{darkturquoise}{0.4148}} \\ \hline
 (C) & 0.2851  & {\bf \textcolor{darkorchid}{0.3254}} ($\pm$0.01)  & {\bf \textcolor{darkturquoise}{0.3694}} \\
 & 0.3587  & {\bf \textcolor{darkorchid}{0.3605}} ($\pm$0.00)  & {\bf \textcolor{darkturquoise}{0.4148}} \\ \hline
 (D) & {\bf \textcolor{darkorchid}{0.2922}}  & 0.2701 ($\pm$0.01)  & {\bf \textcolor{darkturquoise}{0.3853}} \\
 & {\bf \textcolor{darkorchid}{0.4938}}  & 0.4289 ($\pm$0.01)  & {\bf \textcolor{darkturquoise}{0.5600}} \\ \hline
 (E) & 0.0844  & {\bf \textcolor{darkorchid}{0.1097}} ($\pm$0.00)  & {\bf \textcolor{darkturquoise}{0.1416}} \\
 & 0.1743  & {\bf \textcolor{darkturquoise}{0.3774}} ($\pm$0.00)  & {\bf \textcolor{darkorchid}{0.3152}} \\ \hline
 (F) & 0.0939  & {\bf \textcolor{darkorchid}{0.1380}} ($\pm$0.00)  & {\bf \textcolor{darkturquoise}{0.2095}} \\
 & 0.2170  & {\bf \textcolor{darkorchid}{0.3033}} ($\pm$0.00)  & {\bf \textcolor{darkturquoise}{0.4461}} \\ \hline
 (G) & 0.0551  & {\bf \textcolor{darkturquoise}{0.1741}} ($\pm$0.00)  & {\bf \textcolor{darkorchid}{0.1091}} \\
 & 0.2123  & {\bf \textcolor{darkturquoise}{0.2585}} ($\pm$0.00)  & {\bf \textcolor{darkorchid}{0.2418}} \\ \hline
 (H) & {\bf \textcolor{darkorchid}{0.6846}} & 0.6729 ($\pm$0.01)  & {\bf \textcolor{darkturquoise}{0.7340}} \\
 & 0.6347  & {\bf \textcolor{darkorchid}{0.6356}} ($\pm$0.00)  & {\bf \textcolor{darkturquoise}{0.7456}} \\ \hline
 (I) & 0.4041  & {\bf \textcolor{darkturquoise}{0.4991}} ($\pm$0.02)  & {\bf \textcolor{darkorchid}{0.4402}} \\
 & 0.4403  & {\bf \textcolor{darkorchid}{0.4843}} ($\pm$0.02)  & {\bf \textcolor{darkturquoise}{0.5057}} \\ \hline
 (J) & {\bf \textcolor{darkorchid}{0.0623}}  & 0.0488 ($\pm$0.00)  & {\bf \textcolor{darkturquoise}{0.0901}} \\
 & 0.3823  & {\bf \textcolor{darkturquoise}{0.3956}} ($\pm$0.00)  & {\bf \textcolor{darkorchid}{0.3841}} \\ \hline
 (K) & 0.5101  & {\bf \textcolor{darkturquoise}{0.5541}} ($\pm$0.01)  & {\bf \textcolor{darkorchid}{0.5364}} \\
 & {\bf \textcolor{darkturquoise}{0.6475}}  & 0.6259 ($\pm$0.01)  & {\bf \textcolor{darkorchid}{0.6452}} \\\hline
\end{tabular}
\caption{\label{fig:results}{\bf Highest scores for each dataset and algorithm.} The first row is the adjusted RAND index and the second row the adjusted mutual information. The highest score of the row is in {\bf \textcolor{darkturquoise}{green}} while the second highest is in {\bf \textcolor{darkorchid}{orange}}. The standard error over 10 runs is given in parentheses for DBSCAN++ with uniform initialization. Both other algorithms are deterministic. Each algorithm was tuned across a range of $\epsilon$ with minPts $= 10$. For both DBSCAN++ algorithms, we used $p$ values of $0.1, 0.2,$ or $0.3$. We see that DBSCAN++ uniform performs better than DBSCAN on 17 out of 22 metrics, while DBSCAN++ $K$-center performs better than DBSCAN on 21 out of 22 metrics.}
\end{center}
\vspace{-0.5cm}
\end{figure}

\begin{figure}
\begin{center}
\begin{tabular}{|p{0.33cm}||p{2.2cm}|p{2.2cm}|p{2.0cm}|}
\hline
 & DBSCAN & Uniform & K-Center \\ \hline\hline
 (A) & 3.07 ($\pm$0.08) & 1.52 ($\pm$0.09) & 2.55 ($\pm$0.34) \\ \hline
 (B) & 2.04 ($\pm$0.07) & 1.31 ($\pm$0.07) & 0.79 ($\pm$0.02) \\ \hline
 (C) & 3308 ($\pm$26.4) & 225.86 ($\pm$6.8) & 442.69 ($\pm$2.0) \\ \hline
 (D) & 4.88 ($\pm$0.09) & 1.51 ($\pm$0.05) & 1.32 ($\pm$0.04) \\ \hline
 (E) & 1.5e5 ($\pm$0.17) & 3.5e3 ($\pm$39.23) & 7.0e3 ($\pm$41.1) \\ \hline
 (F) & 37.63 ($\pm$0.38) & 8.20 ($\pm$0.22) & 9.84 ($\pm$0.06) \\ \hline
 (G) & 67.05 ($\pm$0.63 & 11.41 ($\pm$0.21) & 15.23 ($\pm$0.32) \\ \hline
 (H) & 1.07 ($\pm$0.03) & 0.78 ($\pm$0.03) & 0.81 ($\pm$0.03) \\ \hline
 (I) & 1.75 ($\pm$0.04) & 0.91 ($\pm$0.03) & 0.97 ($\pm$0.09) \\ \hline
 (J) & 1.0e5 ($\pm$76.43) & 5.2e3 ($\pm$17.48) & 1.5e3 ($\pm$36.4) \\ \hline
 (K) & 1.2e4 ($\pm$160) & 1.9e3 ($\pm$32.45) & 1.9e3 ($\pm$30.4) \\ \hline
 (L) & 3.9e9 ($\pm$4.3e4) &  7.4e8 ($\pm$4.1e3) &  3.6e8($\pm$307) \\ \hline
 (M) & 4.1e9 ($\pm$6.2e4) &  3.1e8 ($\pm$411) &  2.3e8($\pm$1.1e3) \\ \hline
\end{tabular}
\caption{\label{fig:results_runtime}  {\bf Runtimes} (milliseconds) and standard errors for each dataset and algorithm. DBSCAN++ using both uniform and $K$-center initializations performs reasonably well within a fraction of the runtime of DBSCAN. The larger the dataset, the less time DBSCAN++ requires compared to DBSCAN, showing that DBSCAN++ scales much better in practice.}
\end{center}
\vspace{-0.5cm}
\end{figure}

In contrast, DBSCAN++ suffers less from the hyper-connectivity of the core-points until $\varepsilon$ is very large. It turns out that only processing a subset of the core-points not only reduces the runtime of the algorithm, but it provides the practical benefit of reducing the tenuous connections between connected components that are actually far away. This way, DBSCAN++ is much less sensitive to changes in $\varepsilon$ and reaches its saturation point (where there is only one cluster) only at very large $\varepsilon$. 

Performance under optimal tuning is often not available in practice, and this is especially the case in unsupervised problems like clustering where the ground truth is not assumed to be known. Thus, not only should procedures produce accurate clusterings in the best setting, but it may be even more important for procedures to be precise, {\it easy to tune}, reasonable across a {\it wide range} of its hyperparameter settings. This added robustness (not to mention speedup) may make DBSCAN++ a more practical method. This is especially true on large datasets where it may be costly to iterate over many hyperparameter settings. 

\subsection{Performance under optimal tuning}
We see that under optimal tuning of each algorithm, DBSCAN++ consistently outperforms DBSCAN in both clustering scores and runtime. We see in Figure~\ref{fig:results} that DBSCAN++ with the uniform initialization consistently outperforms DBSCAN and DBSCAN++ with $K$-center initialization consistently outperforms both of the algorithms. Figure~\ref{fig:results_runtime} shows that indeed DBSCAN++ gives a speed advantage over DBSCAN for the runs that attained the optimal performance. These results thus suggest that not only is DBSCAN++ faster, it can achieve better clusterings. 


\section{Conclusion}
In this paper, we presented DBSCAN++, a modified version of DBSCAN that only computes the density estimates for a subset $m$ of the $n$ points in the original dataset. We established statistical consistency guarantees which show the trade-off between computational cost and estimation rates, and we prove that interestingly, up to a certain point, we can enjoy the same estimation rates while reducing computation cost. We also demonstrate this finding empirically. We then showed empirically that not only can DBSCAN++ scale considerably better than DBSCAN,  its performance is competitive in accuracy and consistently more robust across their mutual bandwidth hyperparameters. Such robustness can be highly desirable in practice where optimal tuning is costly or unavailable.

\clearpage
{
\bibliography{paper}
\bibliographystyle{plainnat}
}

\clearpage
{
\appendix
\onecolumn
{\Large \bf Appendix}
\section{Comparison to other methods}
In this section, we compare DBSCAN++ against replacing the nearest-neighbor search needed for DBSCAN with an approximate nearest neighbor method using the FLANN (\hyperlink{https://www.cs.ubc.ca/research/flann/}{https://www.cs.ubc.ca/research/flann/}) library, and we call it ANN DBSCAN.
\begin{figure}[H]
\includegraphics[width=0.98\textwidth]{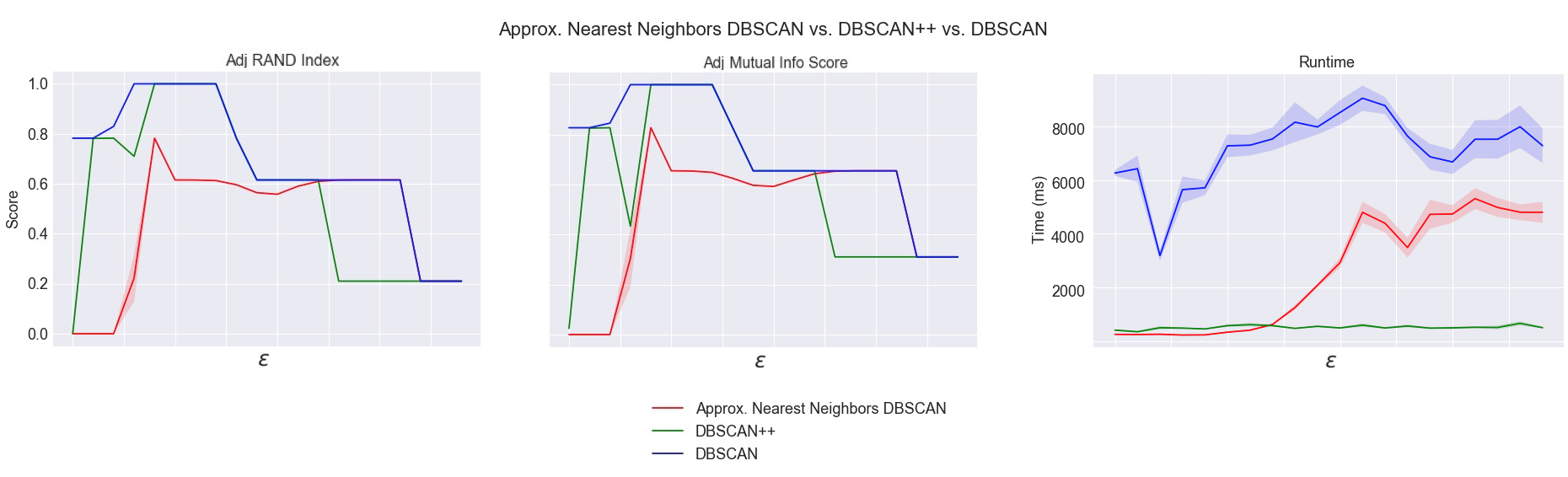}
\caption{\label{figure:ann} {\bf DBSCAN using approximate nearest neighbors vs. DBSCAN++ vs. DBSCAN}. Experimental results on a synthetic dataset of 10,000 points drawn from five 50-dimensional uniform distributions run on DBSCAN++, DBSCAN, and DBSCAN using a fast approximate nearest neighbors algorithm from the FLANN library. DBSCAN++ was run with $K$-center initialization and $m/n=0.1$. All algorithms were run with $minPts = 10$. ANN DBSCAN shows a comparable speedup to DBSCAN++ but poorer performance compared to both DBSCAN and DBSCAN++, whereas DBSCAN++ shows both comparable performance to DBSCAN and comparable runtime to ANN DBSCAN.}
\end{figure}
}
\end{document}